\relax
\documentclass[letterpaper]{article} 
\usepackage{aaai20}  
\usepackage{times}  
\usepackage{helvet} 
\usepackage{courier}  
\usepackage[hyphens]{url}  
\usepackage{graphicx} 
\urlstyle{rm} 
\usepackage{graphicx}  
\frenchspacing  
\setlength{\pdfpagewidth}{8.5in}  
\setlength{\pdfpageheight}{11in}  
\nocopyright
 \pdfinfo{
/Title (STRIPS Action Discovery)
} 

\usepackage{soul}
\usepackage[hidelinks]{hyperref}
\usepackage[utf8]{inputenc}
\usepackage[small]{caption}
\usepackage{amsmath}
\usepackage{amsthm}
\usepackage{booktabs}
\usepackage{tikz}
\usepackage{subcaption}
\usepackage{multirow}
\usepackage{comment}
\urlstyle{same}

\bgroup

\newtheorem{thm}{Theorem}
\newtheorem{example}{Example}
\newtheorem{definition}{Definition}

\newcommand{\strips}{\textsc{Strips}}

\usepackage{algorithm} 
\usepackage{algorithmic} 

\setcounter{secnumdepth}{0} 

%
\setlength\titlebox{2.5in} 
\title{{\strips} Action Discovery}

\author{Alejandro Su\'arez-Hern\'andez\textsuperscript{\rm 1}, Javier Segovia-Aguas\textsuperscript{\rm 1}, Carme Torras\textsuperscript{\rm 1}, Guillem Aleny\`a\textsuperscript{\rm 1} \\
\textsuperscript{\rm 1}Institut de Robòtica i Informàtica Industrial, CSIC-UPC.\\ Llorens i Artigas 4-6, 08028 Barcelona, Spain.\\ {\tt \{asuarez, jsegovia, torras, galenya\}@iri.upc.edu}
}

\begin{document}

\maketitle

\begin{abstract}
The problem of specifying high-level knowledge bases for planning becomes a hard task in realistic environments. This knowledge is usually handcrafted and is hard to keep updated, even for system experts. Recent approaches have shown the success of classical planning at synthesizing action models even when all intermediate states are missing. These approaches can synthesize action schemas in Planning Domain Definition Language (PDDL) from a set of execution traces each consisting, at least, of an initial and final state. In this paper, we propose a new algorithm to unsupervisedly synthesize ${\strips}$ action models with a classical planner when action signatures are unknown. In addition, we contribute with a compilation to classical planning that mitigates the problem of learning static predicates in the action model preconditions, exploits the capabilities of SAT planners with parallel encodings to compute action schemas and validate all instances. Our system is flexible in that it supports the inclusion of partial input information that may speed up the search. We show through several experiments how learnt action models generalize over unseen planning instances.

\end{abstract}

\section{Introduction}


Most work in Artificial Intelligence planning assumes models are given as input for {\em plan synthesis}~\cite{ghallab2004automated}. Nevertheless, the acquisition of such models in complex environments becomes a hard task even for system experts limiting the potential applications of planning \cite{kambhampati2007model}. Research in web-service composition \cite{carman2003web} and work-flow management \cite{blythe2004automatically} as planning is an example of bottleneck for generating the input specification. 

Kambhampati (2007) \cite{kambhampati2007model} proposes in his work a shallow or approximate planning domain representation such that it can be criticized and/or refined with experiences. In that sense, the problem of learning action models consists on inferring, synthesizing or refining action schemas that can be used for planning on new tasks in the same environment. However, frameworks for learning action models varies from their inputs, action granularity, techniques, and output structures \cite{arora2018review}. On the one hand, most frameworks require input traces with the action signatures (i.e. action names and parameters), objects and predicates such as ARMS~\cite{yang2007learning}, SLAF~\cite{amir2008learning}, and $\strips$-based compilations~\cite{aineto2018learning}. On the other hand, the LOCM family of algorithms~\cite{cresswell2009acquisition,cresswell2011generalised,gregory2015domain}, generate object-centric planning models so they only need sequences of grounded actions. The action learner methods \cite{amado2018goal} can produce (without guarantees) a set of PDDL actions given a set of image transitions. The aim of these frameworks is to learn action models like the one in Figure~\ref{sfig:visitall-schema}, that represents the action to move the robot between adjacent locations. Figure~\ref{sfig:visitall-grid} shows an instance of {\em visitall}, where the agent starts at the bottom-left corner and must visit all cells in a grid. Learning action models consists in automatically generating the domain that rules any problem in a given environment, e.g. the ${\sf move}$ action with its preconditions and effects in the {\em visitall} domain.

\begin{figure}[tbp]
\begin{subfigure}[b]{0.2\textwidth}
\centering
\includegraphics[width=\textwidth]{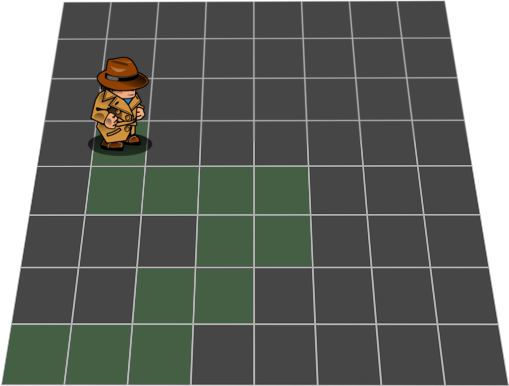}
\caption{Grid of $8 \times 8$ size.}
\label{sfig:visitall-grid}
\end{subfigure}
\begin{subfigure}[b]{0.3\textwidth}
\centering
\footnotesize
\begin{tabular}{l}
{\bf action :=} {\sf move( ?x ?y - place )}\\
{\bf pre :=} {\sf (agent-at ?x - place)$\wedge$}\\[-1.5mm]{\sf (connected ?x ?y - place)}\\
{\bf eff :=} {\sf (visited ?y - place)$\wedge$}\\[-1.5mm]{\sf ($\neg$ (agent-at ?x - place))}$\wedge$\\[-1.5mm]{\sf (agent-at ?y - place)}
\end{tabular}
\caption{Action schema: move }
\label{sfig:visitall-schema}
\end{subfigure}
\caption{{\em visit-all} example with the action schema.}
\vspace{-1em}
\label{fig:visitall-example}
\end{figure}

Motivated by recent advances on unsupervised learning approaches to apply planning in the latent space \cite{asai2018classical,amado2018goal}, unsupervised generalized model synthesis with classical planners \cite{segovia2017unsupervised}, 
and learning action models from minimal observability \cite{aineto2019learning}, we present in this paper a novel compilation to classical planning for learning $\strips$ action schemas like the one in Figure~\ref{sfig:visitall-schema} with an unsupervised algorithm. The input to the compilation is a set of goal-oriented examples, i.e. a set of planning instances with their set of objects, predicates, initial states and goal conditions. Then, action schemas are induced from the plan that solves the compiled planning problem. These are validated with the input examples in parallel, and evaluated on new instances to test their generalization. Moreover, these approaches can work with approximate models where a partial action schema is included as prior domain knowledge, but new actions can be learnt, validated on new evidences, and refine the domain model. The main contributions of this paper are:
\begin{itemize}
    \item a $\strips$-based compilation to learn {\em action signatures} from minimal information with a SAT-based planning system,
    \item a method to learn static predicates by removing preconditions of action models instead of being programmed,
    \item new actions in the compilations adapted to validate input examples in parallel with the learnt action models,
    \item an algorithm to explore learning tasks in the bounded space of configurations, and
    \item an empirical evaluation of action schemas convergence in regard to the increasing number of input examples.
\end{itemize} 

In the next section we introduce classical planning with conditional effects, action schemas in $\strips$, and formalize the problem of learning these actions with classical planning. Then, we formalize our task representation and its compilation to classical planning, followed by the configuration algorithm that generates the compilations. In the Experiments Section, we use the configuration algorithm to run experiments on learning and validating action models. Finally, we review related work and conclude the paper with some future work direction.

\section{Preliminaries}

In this section we introduce some preliminar notation of classical planning with conditional effects, and action schemas when defined in $\strips$ language~\cite{fikes1971strips} and how the problem of learning action schemas have been usually addressed.

\subsection{Classical planning with conditional effects}
A {\em classical planning problem} is defined by a 4-tuple $P = \langle F,A,I,G \rangle$, where $F$ is the set of fluents, $A$ is the set of actions, $I \subseteq F$ is the initial state and $G \subseteq F$ a goal condition. 

One extension of planning actions $A$, is to describe their effects conditionally affected by the current state $s$. Therefore, we formally define the state space $S = 2^{|L|}$ in terms of literals $L$, where a literal $l \in L$ is a valuation of a fluent $f \in F$, i.e. $l = f$ or $l = \neg f$. We assume $|s| = |L|$, $L$ does not assign conflicting values and $\neg L = \{\neg l : l \in L\}$ to be the complement of $L$.

Then, an action $a \in A$ with conditional effects is defined as $a = \langle \mathsf{pre}(a), \mathsf{ce}(a) \rangle$, where $\mathsf{pre}(a)$ are the preconditions and $\mathsf{ce}(a)$ is the set of conditional effects. We say that an action $a \in A$ is {\em applicable} in state $s \in S$ iff the preconditions hold in that state, i.e. $\mathsf{pre}(a) \subseteq s$. 
Moreover, we denote $C \triangleright E \in \mathsf{ce}(a)$ to the sets of literals that correspond to the condition $C$ and the effects $E$, and the whole set of {\em triggered effects} is defined as $\mathsf{eff}(s,a) = \cup_{C \triangleright E \in \mathsf{ce}(a), C \subseteq s} E$. 
Thus, the transition function is defined as $\theta(s,a) = (s \setminus \mathsf{eff}^-(s,a))\cup \mathsf{eff}^+(s,a)$, where $\mathsf{eff}^-(s,a)$ and $\mathsf{eff}^+(s,a)$ correspond to the negative and positive effects in $\mathsf{eff}(s,a)$ to apply in the current state $s$.

A solution to a planning problem is a {\em plan} defined as a sequence of actions $\pi =\langle a_1,\ldots, a_n \rangle$ that applied in the initial state $s_0 = I$ generates a sequence of $n+1$ states $s_0,\ldots,s_n$ where the goal condition holds in the last one, i.e. $G \subseteq s_n$. For a plan to be valid, each action must be applicable in the corresponding state $\mathsf{pre}(a_i) \subseteq s_{i-1}$.




\subsection{Action schemas and grounding in \strips}
The aim of this work is to synthesize action schemas from demonstrations in $\strips$ language \cite{fikes1971strips}, which is a subset of PDDL. Thus, for simplicity of the paper we represent the tasks in PDDL with negations, disjunctive preconditions and conditional effects, so that any off-the-shelf (expressive enough) planning system can be used, i.e. Madagascar SAT-based planner \cite{rintanen2014madagascar}.

We denote a {\it PDDL task} as $P_{{\sf PDDL}} = \langle \mathcal{P,A},\Sigma,I,G \rangle$ where $\mathcal{P}$ and $\mathcal{A}$ are the set of predicates and action schemas respectively, $\Sigma$ is the set of objects, $I$ is the initial state and $G$ is the goal condition. 

Let us define $X$ as the possibly empty set of parameters that describe the parameterization of predicates and action schemas. Then, an {\it action schema} $\alpha[X]$ is defined as $\langle {\sf n}_\alpha, X, {\sf pre}_\alpha,{\sf add}_\alpha, {\sf del}_\alpha \rangle$ where ${\sf n}_\alpha$ is the name and $X$ is the set of parameters, both describe the header of the action, also known as action signature. Then, the body is described by preconditions, positive and negative effects. 

\begin{definition}[Well-defined Action Schema] 
\label{def:well-defined}
The action schema is {\it well-defined} if and only if the set of parameters used in the body is equal to the parameters of the action signature, i.e. $X = \{ Y : \forall p[Y] \in \{{\sf pre}_\alpha \cup {\sf add}_\alpha \cup {\sf del}_\alpha\}  \}$.
\end{definition}

\begin{example}[Schema]
\label{ex:visitall-schema}
 In Figure~\ref{sfig:visitall-schema} we have an example where the header is described with a name $n_{\sf move} = {\sf move}$ and a set of parameters $X = \{x,y\}$ both of type place. The body of the action schema consist of three sets of predicates, the set of preconditions ${\sf pre}_{\sf move} = \{{\sf agent\text{-}at}[x],{\sf connected}[x,y]\}$, the set of positive effects ${\sf add}_{\sf move} = \{{\sf visited}[y],{\sf agent\text{-}at}[y]\}$ and the negative effects ${\sf del}_{\sf move} = \{{\sf agent\text{-}at}[x]\}$. This action schema is well-defined, since the union of all predicate parameters in the body is $\{x,y\}$ which is equal to $X$.
\end{example}

The grounding of predicates $p[X] \in \mathcal{P}$ and action schemas $\alpha[X] \in \mathcal{A}$ is the valid assignment of objects $\omega \in \Sigma^{|X|}$ to their parameters, where  $\Sigma^{|X|}$ is the $|X|$-th Cartessian product of $\Sigma$ objects. Therefore, the set of propositional variables or fluents is $F = \{ p[\omega] : p[X] \in \mathcal{P}, \omega \in \Sigma^{|X|} \}$ which means, for every predicate there is a parameter assignment of $|X|$ objects. Thus, grounded actions are $A = \{ \alpha[\omega] : \alpha[X] \in \mathcal{A},\omega \in \Sigma^{|X|} \}$.

\begin{example}[Grounding]
\label{ex:grounding}
 When grounding Example~\ref{ex:visitall-schema}, if the set of objects is $\Sigma = \{{\sf p1, p2}\}$,
 which are two different places in visitall domain, the predicates and actions are grounded as
 follows: $F = \{ {\sf at[p1]},{\sf at[p2]},{\sf v[p1]},{\sf v[p2]}, $ ${\sf conn[p1,p1]},{\sf conn[p1,p2]},{\sf conn[p2,p1]},{\sf conn[p2,p2]}\}$, and the actions are $A = \{ {\sf m[p1,p1]},{\sf m[p1,p2]},{\sf m[p2,p1]},{\sf m[p2,p2]} \}$. We have simplified notation, such that ${\sf agent\text{-}at}$ is ${\sf at}$, ${\sf visited}$ is ${\sf v}$, ${\sf connected}$ is ${\sf conn}$, and ${\sf move}$ is ${\sf m}$. Once predicates and actions are grounded, the set of reachable states will depend on the initial state of each instance, even though there are some other grounding strategies \cite{gnad2019learning}.
\end{example}



\subsection{Learning action schemas from plan traces} 
\label{sec:learning-action-models}
The problem of learning action schemas is usually formalized as a function that minimizes a cost in the current model from observing an agent acting in the environment. In this section we describe these observations in plan traces, and formalize the inputs/outputs for the task of learning action models.

\subsubsection{Plan traces} 
There are many representation mechanisms for learning action models \cite{arora2018review}, but all these systems include at least a set of plan traces or transitions between states as part of their input.

\begin{definition}[Plan Trace] A plan trace $\tau$ is a finite sequence of states and actions, i.e. $\tau = \langle s_0,a_1,\ldots,a_n,s_n\rangle$. In that, every state $s_i$ has been reached from applying action $a_i$ in state $s_{i-1}$.
\end{definition}

\paragraph{Trace generation.} Plan traces are intuitively generated in real scenarios by observing an agent interact with an environment. This can easily be done by a software where situations and decisions taken by the agent are recorded. For instance, a domotic house where the housekeeper enter the house and shutdown the alarm, is a clear scenario where states and actions can be observed, and in which ``shutdown alarm'' action can be learned from observations. Former example shows a trace that is {\it goal-oriented}, but traces can also be represented by {\it random-walks} \cite{amir2008learning} like a robot exploring possible action outcomes in a local scenario which allows to learn action models in an {\it online} setting. However, we follow the standard {\it offline} approach in the field, where action models are learned from a given set of traces.



\paragraph{Trace observability.} There are multiple observation levels for every input plan trace $\tau$ \cite{aineto2019learning}. We denote $\Psi$  to the function that observes states and $\Phi$ to the one that observe actions from each plan trace $\tau$. In full observability environments, $\Psi : S \rightarrow S$ and $\Phi : A \rightarrow A$ always produce the original state and action respectively. In environments with no observability both functions output $\bot$, which corresponds to no observed action or state. Although, in partial observability both functions could either observe the correct state and action or $\bot$, i.e. $\Psi : S \rightarrow S \cup \{\bot\} $ and $\Phi : A \rightarrow A \cup \{\bot\} $. The initial state and goal condition (or goal state if its fully defined) are always known for any {\it goal-oriented} trace. Noise in observations is orthogonal to the observation granularity, in that, other mechanisms can be considered to handle noisy states and actions \cite{mourao2012learning,zhuo2013action} in traces, however, we assume noise-free observability.

\subsubsection{Learning $\strips$ Action Models}
\label{ssec:learning-task}
The task of learning $\strips$ action models usually requires more than one plan trace as input. Let us define a set of traces as $\mathcal{T} = \{\tau_i | 1 \leq i \leq m\}$, where $m$ is the number of training samples.


\begin{definition}[Learning Action Models Task] is a tuple $\Lambda = \langle \mathcal{P}, \mathcal{A}', \Sigma, \mathcal{T}, \mathcal{F}\rangle$ where $\mathcal{P}$ is the set of predicates, $\mathcal{A}'$ is a set of partially defined action models, $\Sigma$ is the set of objects, $\mathcal{T}$ is the set of traces, and $\mathcal{F}$ is the cost function to minimize.
\end{definition}

A solution to $\Lambda$ is a complete set of action models $\mathcal{A}$, such that $\mathcal{A}' \subseteq \mathcal{A}$. There must be a guarantee that $\mathcal{A}$ minimizes $\mathcal{F}$ in $\mathcal{T}$. The set of predicates $\mathcal{P}$ is used for learning preconditions and effects of action models, while the set of objects $\Sigma$ is used to validate that $\mathcal{A}$ can correctly generate each planning trace $\tau \in \mathcal{T}$.

The learning process of action models highly depends on the cost function specification $\mathcal{F}$. Often, this is different in every approach, ARMS counts errors over all possible predicates \cite{yang2007learning}, and FAMA uses classical metrics such as precision and recall \cite{aineto2019learning}. Then, for any given cost function metric, two models can be compared to decide which one better represents the set of plan traces. However, when comparing a model with the ground truth (if that exists), only syntactical errors should be taken into account \cite{aineto2019learning}, because semantics do not affect to the dynamics of the environment.




\begin{table}
\centering
\begin{tabular}{|l|l|l|}
\hline
 \multirow{7}{*}{Input to $\Lambda$} & \multirow{3}{*}{$\mathcal{P}$} & 
 ${\sf (connected \text{ } ?x \text{ } ?y \text{ - } place)}$,\\
 & & ${\sf (agent\text{-}at \textbf{ } ?x \text{ - } place)}$,\\
 & & ${\sf (visited \text{ } ?x \text{ - } place)}$\\
 \cline{2-3}
 & $\mathcal{A}'$ & ${\sf (:action  \textbf{  move} ( ?x \text{ } ?y \text{ - } place ) )}$ \\
 \cline{2-3}
 & $\Sigma$ & p1,p2,$\ldots$,p63,p64 - ${\sf place}$ \\
 \cline{2-3}
 & $\mathcal{T}$ & $\tau_1 = \{ s_0, {\sf move(p1\text{ } p2}), \ldots, $\\
 & & ${\sf move(p63\text{ } p64}), s_{n}\}$, $\ldots$, \\
 & & $\tau_m = \{\ldots\}$ \\
 \cline{2-3}
 &$\mathcal{F}$ & $ \sum_{\forall o \in \mathcal{O}}\frac{ {\sf predicate\text{-}errors}( o )}{{\sf all\text{-}possible\text{-}predicates}( o )}$\\
 \hline
 Output $\mathcal{A}$ & \multicolumn{2}{c|}{Action model for ${\sf move}$ in Figure~\ref{sfig:visitall-schema}}  \\
 \hline
\end{tabular}
\caption{Input/Output example for visitall problem}
\label{tab:input-output-example}
\end{table}

In Table~\ref{tab:input-output-example} we have an input/output example for the visitall in Figure~\ref{fig:visitall-example}. The input to this problem is the set of predicates introduced in Figure~\ref{sfig:visitall-schema}, the set of objects representing the 64 places in the $8 \times 8$ grid, and a partial action schema with the ${\sf move}$ action signature but whose preconditions and effects are empty. We have up to $m$ plan traces that show action sequences for visiting all grid places, and a cost function defined as the total number of errors divided by all possible predicate choices.








\section{$\strips$ Action Discovery Task}
In this section we introduce our learning task representation for classical planning, explaining the trace properties and the input requirements to guide an unsupervised search for learning valid action models.

\subsection{Task representation} 

 The representation of our problem $\Lambda^k_{r,m}$ extends that in Aineto et al. (2018) where semantics of actions (including action signatures) are unknown and the number of parameters for each of the $k$ actions is bounded by a max-arity variable $r$, and validated over $m$ examples. Action signatures highly restrict the possible preconditions and effects for learning $\strips$ action models. We relax that hard constraint in this paper to study the impact of observations for action discovery, in other words, discovering $\strips$ action headers and bodies when most of information is lost and only the initial state and objectives of each agent is known. 
 
\paragraph{Traces.} Therefore, we observe one or more agents interacting in an environment that can be defined with the same fluents and actions, also known as {\it classical planning frame} $\mathcal{D} = \langle F,A\rangle$. Agents are represented with a set of $m$ traces $\mathcal{T} = \{\tau_1,\ldots,\tau_m\}$ that are {\em noise-free} and {\em goal-oriented} in which we only observe their initial and goal states, where the latter might be partially specified. Thus, functions $\Psi$ and $\Phi$ always output $\bot$ (no observability), and traces have unknown length. So, we define each trace $\tau_i \in \mathcal{T}$, s.t. $1 \leq i \leq m$, as an initial state $s_0^i$ and a partially defined goal state $s_n^i$. For simplification, we refer to each trace as the $i$-th planning instance, i.e. $\tau_i = \langle s_0^i,s_n^i \rangle$ where $s_0^i = I_i$ and $G_i \subseteq s_n^i$. Then, $\mathcal{T} = \{P_1,\ldots,P_m\}$ where each $P_i = \langle \mathcal{D}, I_i,G_i\rangle$.
 
 \paragraph{Predicates and Objects.} Even though predicates and objects can be inferred from observed initial states and goal conditions in the traces, we provide $\mathcal{P}$ and $\Sigma$ as part of the input to ease the compilation to planning. Moreover, we extend the set objects $\Sigma^k_{r,m}$ with $k$ action, $r$ variable and $m$ trace objects, and the set $\mathcal{P}^k_r$ with predicates that represent the arities from $0$ to $r$ for each action, precondition, positive and negative effect.
 
 \paragraph{Partial Action Models.} The partially defined action models $\mathcal{A}^k_r$ consist of $k$ unnamed actions with a number of parameters that go from $0$ to $r$ that is the max-arity value. Then, the total number of partially action models is $|\mathcal{A}^k_r| = k \times (r+1)$. Each $\alpha \in \mathcal{A}^k_r$ is specified with all possible predicate combinations with {\it well-defined} parameter assignments in the preconditions (Definition~\ref{def:well-defined}), and empty effects. 
 
 
 \begin{example}[Representation]
  Following Example~\ref{ex:grounding}, the ground truth shows there is only an action $\alpha$, $k = 1$, with a max-arity $r = 2$. Then, given $\mathcal{P} = \{{\sf at[x]},{\sf v[x]},{\sf conn[x,y]}\}$, $\Sigma = \{{\sf p1,p2}\}$ and $\tau_1 = \{ s_0^1 = \{{\sf at[p1]},{\sf v[p1]},{\sf conn[p1,p2]},{\sf conn[p2,p1]}\}, s_n^1 = \{{\sf v[p1]},{\sf v[p2]}\} \}$. The partially defined action models ($n_\alpha[X]$) are $\alpha_0[]$, $\alpha_1[x]$ and $\alpha_2[x,y]$. Since all predicates have at least one parameter, $\alpha_0$ has no valid assignments in its preconditions so ${\sf pre}_{\alpha_0} = \emptyset$, ${\sf pre}_{\alpha_1} = \{{\sf at[x]},{\sf v[x]}\}$, and ${\sf pre}_{\alpha_2} = \{ {\sf at[x]},{\sf at[y]},{\sf v[x]},{\sf v[y]},{\sf conn[x,y]},{\sf conn[y,x]} \}$, while the effects of all action models are empty ${\sf add}_{\alpha_j} = \emptyset$, and ${\sf del}_{\alpha_j} = \emptyset$ (for all $0 \leq j \leq r$).
 \end{example} 
 
 The reason to remove preconditions (delete predicates) and program effects (add predicates) is to
 keep action models as restrictive as possible while having a minimum impact in the environment, so that action preconditions are relaxed and effects incremented  when more states are observed, until a convergence. This also solves the problem of {\it learning static predicates} \cite{gregory2015domain}, where deciding if they must be included or not in the action preconditions becomes a complex task, so it is much more intuitive and easy to solve when all predicate combinations are included in the preconditions and the decision is about when they should {\it not} be there.
 
 \paragraph{Cost Function.} We intuitively encode the cost function $\mathcal{F}$ as a minimization in the number of editions, where an edition is either removing or programming a predicate in the action model. This can also be reformulated as programming the minimum number of adding ($|{\sf add}_\alpha|$) and delete ($|{\sf del}_\alpha|$) effects, while removing the minimum number of predicates from preconditions ($-|{\sf pre}_\alpha|$), i.e. Equation~\ref{eq:cost-function}.
 
 
  \begin{equation}
 \label{eq:cost-function}
     \mathcal{F} = \frac{1}{|\mathcal{A'}|} \sum_{\alpha \in \mathcal{A'}} ( |{\sf add}_\alpha| + |{\sf del}_\alpha| - |{\sf pre}_\alpha| )
 \end{equation}
 
 
 The cost function in Equation~\ref{eq:cost-function} guides the search to action schemas that could be completely different from the ground truth (if that exists), but this could be still consistent and correct, since complex actions (with many parameters) can be splitted into multiple simpler actions (less parameters) \cite{areces2014optimizing}. Now we are ready to define our {\it unsupervised learning task} for action discovery as $\Lambda^k_{r,m} = \langle \mathcal{P}^k_r, \mathcal{A}^k_r, \Sigma^k_{r,m}, \mathcal{T}, \mathcal{F} \rangle$, where $k$ is the number of actions in the domain and $r$ is the bound to represent the max-arity for each action. A solution to $\Lambda^k_{r,m}$ is a set of action models $\mathcal{A}$ that subsumes $\mathcal{A}^k_r$ and solve each $\tau \in \mathcal{T}$.

\subsection{Computing $\strips$ Action Models} 
The computation of our learning task $\Lambda^k_{r,m}$ follows a {\em compilation-based} approach to PDDL, such that experiments are reproducible and any off-the-shelf classical planner can be used\footnote{In case the paper is accepted, we are going to release the framework too.}. Firstly, we explain the PDDL representation of each one of the task components.

\subsubsection{Objects, Predicates and Fluents}
The set of objects $\Sigma^k_{r,m}$ is extended with four new types corresponding to variables that bind predicate parameters to action parameters, i.e. $\Sigma_v$; copies of the original predicate names $\Sigma_p$ to relate what is being removed or added in the action model; action objects $\Sigma_a$ to indentify the action that is being updated or applied; and $\Sigma_t$ is the set of trace objects. Thus, $\Sigma^k_{r,m} = \Sigma \cup \Sigma_v \cup \Sigma_p \cup \Sigma_a \cup \Sigma_t$ where:
\begin{itemize}
\item $\Sigma_v = \{ {\sf var}_i : 1 \leq i \leq r \}$,
\item $\Sigma_p = \{ {\sf pred}_p : p \in \mathcal{P} \}$,
\item $\Sigma_a = \{ {\sf act}_i : 1 \leq i \leq k \}$,
\item $\Sigma_t = \{ {\sf tr}_i : 1 \leq i \leq m \}$.
\end{itemize}


We describe the following fluents $F'$ as the relation between predicates $\mathcal{P}^k_r$ and objects $\Sigma^k_{r,m}$. There are three kinds of fluents where the first one represents the mode $F_{\sf mode}$, if it is learning or validating the model; the second $F_{\sf ed}$ consists of removed preconditions and programmed effects from the action schemas; and the last one $F_{\sf pred}$, represents when a fluent from $F$ holds in the current state of a trace $\tau \in \mathcal{T}$. Then, $F' = F_{\sf mode} \cup F_{\sf ed} \cup F_{\sf pred}$ where :


\begin{itemize}
    \item $F_{\sf mode} = \{ {\sf edit\text{-}mode}, {\sf val\text{-}mode} \}$,
    \item $F_{\sf ed} = \{{\sf rpre}^\alpha_{p,\sigma},{\sf add}^\alpha_{p,\sigma},{\sf del}^\alpha_{p,\sigma} : \alpha \in \Sigma_\alpha, p \in \Sigma_p, \sigma \in \Sigma^{{\sf ar}(p)}_v\}$,
    \item $F_{\sf pred} = \{ {\sf holds}^\tau_{p,\omega} : \tau \in \Sigma_t, p \in \Sigma_p, \omega \in \Sigma^{{\sf ar}(p)}\}$.
\end{itemize}

To simplify fluents notation, we refer to action objects as $\alpha \in \Sigma_a$, predicate objects $p \in \Sigma_p$, trace objects $\tau \in \Sigma_t$. Also, we refer to $\omega \in \Sigma^{{\sf ar}(p)}$ as a tuple of objects assigned to a predicate $p$ with arity ${{\sf ar}(p)}$, so $\sigma \in \Sigma^{{\sf ar}(p)}_v$ is a tuple of ${\sf ar}(p)$ variable objects assigned to edition predicates that points to the corresponding action parameters.

\subsubsection{Editing and Applying Actions}

The first task of learning action models $\Lambda^k_{r,m}$ consists of editing the partially defined action models $\mathcal{A}^k_r$ by removing preconditions and programming effects:

\begin{align*}
\mathsf{pre}( \mathsf{edit}\text{-}{\sf ins}^\alpha_{p,\sigma}) =& \{ {\sf edit\text{-}mode} \} \cup \\ &\{ {\sf var}_i \neq {\sf var}_j : {\sf var}_i \in \sigma, 1 \leq i \leq {\sf ar}(\alpha) < j \},\\
\mathsf{ce}( \mathsf{edit}\text{-}{\sf ins}^\alpha_{p,\sigma}) =& \{\emptyset\} \triangleright \{ {\sf ins}^\alpha_{p,\sigma}  \}.
\end{align*}

The second precondition in edit actions is a constraint that guarantee well-defined action schemas, and word ${\sf ins}$ has to be substituted either by ${\sf rpre}$, ${\sf add}$ or ${\sf del}$ predicates. Once the edition of action models finishes, there is an action to start the validation phase:

\begin{align*}
\mathsf{pre}( \mathsf{ed2val}) =& \{ {\sf edit\text{-}mode} \},\\
\mathsf{ce}( \mathsf{ed2val}) =& \{\emptyset\} \triangleright \{ \neg {\sf edit\text{-}mode}, {\sf val\text{-}mode} \}.
\end{align*}

The computing continues applying the edited actions until the goal condition is reached for each trace:

\begin{align*}
\mathsf{pre}( \mathsf{apply}^{\alpha,\tau}_{\omega}) =& \{ {\sf val\text{-}mode} \} \cup \\ &\{ \neg {\sf rpre}^\alpha_{p,\sigma} \Rightarrow {\sf holds}^\tau_{p,\omega} \}_{\forall p \in \Sigma_p,\sigma \in \Sigma^{{\sf ar}(p)}_v},\\
\mathsf{ce}( \mathsf{apply}^{\alpha,\tau}_{\omega}) =& \{\{ {\sf add}^\alpha_{p,\sigma} \} \triangleright \{ {\sf holds}^\tau_{p,\omega} \} \cup \\ & \{{\sf del}^\alpha_{p,\sigma} \} \triangleright \{ \neg{\sf holds}^\tau_{p,\omega} \} \}_{\forall p \in \Sigma_p,\sigma \in \Sigma^{{\sf ar}(p)}_v}.
\end{align*}

\begin{example}[Variables and objects]
The relation between $\sigma$ variables and $\omega$ objects for each predicate must be well-defined, i.e. in Figure~\ref{sfig:visitall-schema}, the positive effect $({\sf visited\text{ }?y}\text{ - }{\sf place})$ is encoded in the action schema as ${\sf add^{move}_{visited,\langle var_2 \rangle}}$, so applying a move action between positions ${\sf p1}$ and ${\sf p2}$, grounds the action signature to ${\sf apply}^{{\sf move},{\sf tr}_1}_{\langle{\sf p1,p2}\rangle}$, and the positive effect will be ${\sf holds}^{{\sf tr}_1}_{{\sf visited},\langle {\sf p2} \rangle}$ which means ${\sf p2}$ has been visited in the first trace.
\end{example}

\subsubsection{DAM Compilation}
\label{sssec:dam-compilation}

We named our compilation {\bf D}iscovering {\bf A}ction {\bf M}odels (DAM). It is a compilation to a classical planning problem $P' = \langle F',A',I',G' \rangle$ where the set of fluents is $F' = F_{\sf mode}\cup F_{\sf ed} \cup F_{\sf pred}$. A set of actions for each one of the three phases: the edition phase $A_{\sf ed} = \{{\sf edit\text{-}ins}^\alpha_{p,\omega} : \alpha \in \Sigma_a, p \in \Sigma_p, \sigma \in \Sigma^{{\sf ar}(p)}_v \}$; the transition from edition to validation $A_{{\sf e2v}} = \{ {\sf ed2val}\}$; and the trace validation phase with applying actions $A_{\sf app} = \{ {\sf apply}^{\alpha,\tau}_{\omega} : \alpha \in \Sigma_a, \tau \in \Sigma_t, \omega \in \Sigma^{{\sf ar}(\alpha)} \}$. Thus, $A' = A_{\sf ed} \cup A_{{\sf e2v}} \cup A_{\sf app}$. The new initial state and goal condition are defined as the set of all fluents that hold in the initial state and the goal condition of every trace, i.e. $I' = \{ {\sf holds}^i_{p,\omega} : p[\omega] \in I_i, 1 \leq i \leq m \}$, and $G' = \{{\sf holds}^i_{p,\omega} : p[\omega] \in G_i, 1 \leq i \leq m \}$.


\subsection{Theoretical properties}
In this section we prove the theoretical properties of soundness and completeness of our approach given the number of actions $k$, the max-arity $r$ and the set of $m$ traces.

\begin{thm}[Soundness]
Any classical plan $\pi$ that solves $P'$ induces a set of action models $\mathcal{A}$ that solves $\Lambda^k_{r,m} = \langle \mathcal{P}^k_r,\mathcal{A}^k_{r},\Sigma^k_{r,m},\mathcal{T},\mathcal{F}\rangle$.
\label{thm:soundness}
\end{thm}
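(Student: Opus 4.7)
My plan is to proceed constructively: given any plan $\pi$ solving $P'$, exhibit the induced $\mathcal{A}$ and verify that it is a solution to $\Lambda^k_{r,m}$ in the sense defined (subsumes $\mathcal{A}^k_r$ and solves every trace $\tau \in \mathcal{T}$).

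First I would split $\pi$ into three consecutive segments. The initial state $I'$ contains ${\sf edit\text{-}mode}$ but not ${\sf val\text{-}mode}$, the only way to flip these is the unique ${\sf ed2val}$ action, and edit actions require ${\sf edit\text{-}mode}$ while apply actions require ${\sf val\text{-}mode}$. Hence $\pi = \pi_{\sf ed} \cdot \langle {\sf ed2val}\rangle \cdot \pi_{\sf app}$, where $\pi_{\sf ed}$ consists solely of ${\sf edit\text{-}ins}^\alpha_{p,\sigma}$ actions and $\pi_{\sf app}$ of ${\sf apply}^{\alpha,\tau}_\omega$ actions.

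Next I would read off $\mathcal{A}$ from $\pi_{\sf ed}$. For every $\alpha \in \Sigma_a$, start from the fully-loaded partial action model in $\mathcal{A}^k_r$ and define ${\sf pre}_\alpha = \{p[\sigma] : {\sf rpre}^\alpha_{p,\sigma} \notin s_{\sf end\text{-}ed}\}$, ${\sf add}_\alpha = \{p[\sigma] : {\sf add}^\alpha_{p,\sigma} \in s_{\sf end\text{-}ed}\}$, and ${\sf del}_\alpha = \{p[\sigma] : {\sf del}^\alpha_{p,\sigma} \in s_{\sf end\text{-}ed}\}$, where $s_{\sf end\text{-}ed}$ is the state just before ${\sf ed2val}$ fires. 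The variable-disequality precondition ${\sf var}_i \neq {\sf var}_j$ for $i \leq {\sf ar}(\alpha) < j$ in every edit action guarantees that only parameters bound by $\alpha$'s arity are introduced, so every resulting $\alpha$ satisfies Definition~\ref{def:well-defined}, and by construction $\mathcal{A}^k_r \subseteq \mathcal{A}$.

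The core step is to establish an invariant between the ${\sf holds}^\tau_{p,\omega}$ fluents in $P'$ and the literals of the planning instance $P_\tau$ under $\mathcal{A}$. For each trace $\tau_i$, let $\pi_{\sf app}^i$ be the subsequence of $\pi_{\sf app}$ with superscript $\tau_i$; subsequences for different traces operate on disjoint fluents $\{{\sf holds}^{\tau_i}_{p,\omega}\}$, so they do not interfere and can be considered independently. I would prove by induction on the length of $\pi_{\sf app}^i$ that after the $j$-th apply action for trace $\tau_i$, the sub-state $\{p[\omega] : {\sf holds}^{\tau_i}_{p,\omega} \text{ holds}\}$ equals the state obtained by executing the corresponding sequence of grounded actions from $\mathcal{A}$ on $I_i$. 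The inductive step follows directly by comparing ${\sf pre}({\sf apply}^{\alpha,\tau}_\omega)$ with ${\sf pre}_\alpha[\omega]$ using the definition above, and likewise for effects. Since $G' \subseteq s_{\sf end}$ and $G' = \{{\sf holds}^i_{p,\omega} : p[\omega] \in G_i\}$, each $G_i$ holds in the final simulated state, so the grounded action sequence from $\pi_{\sf app}^i$ is a valid plan for $P_i$ using $\mathcal{A}$.

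The main obstacle I anticipate is the invariant of the inductive step: one has to argue carefully that the conditional precondition $\{\neg {\sf rpre}^\alpha_{p,\sigma} \Rightarrow {\sf holds}^\tau_{p,\omega}\}$ inside ${\sf pre}({\sf apply}^{\alpha,\tau}_\omega)$ coincides exactly with requiring ${\sf pre}_\alpha[\omega] \subseteq s$ in the induced model, and similarly that the guarded conditional effects with ${\sf add}^\alpha_{p,\sigma}$ and ${\sf del}^\alpha_{p,\sigma}$ are immutable after the ${\sf ed2val}$ transition (no edit actions can fire in ${\sf val\text{-}mode}$), so ${\sf add}_\alpha$ and ${\sf del}_\alpha$ remain fixed throughout validation. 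Once this correspondence is made precise, soundness follows immediately.
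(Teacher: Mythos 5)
Your proposal is correct and follows essentially the same route as the paper's own (much terser) proof sketch: decompose $\pi$ into the edit phase, the ${\sf ed2val}$ transition, and the per-trace apply phase, induce $\mathcal{A}$ from the edit actions, and argue that the apply subsequences map each $I_i$ to $G_i$ under the induced model. You simply make explicit the simulation invariant between ${\sf holds}^\tau_{p,\omega}$ and the states of each $P_i$, and the immutability of the edit fluents in ${\sf val\text{-}mode}$, which the paper leaves implicit.
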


\begin{proof}[Proof sketch]
The first actions in $\pi$ are $A_{\sf ed}$ because of ${\sf edit\text{-}mode}$ fluent, in that, preconditions are removed and effects are programmed for each partially defined action model $\mathcal{A}^k_r$. The next is an action $A_{\sf e2v}$ which finishes editing actions and starts the validation phase. Once in validation, a sequence of edited actions from $A_{\sf app}$ is computed for each trace $\tau \in \mathcal{T}$ in parallel, mapping all initial states $I_i$ to goal conditions $G_i$ s.t. $1 \leq i \leq m$. Thus, all traces are solved by the set of edited action models $\mathcal{A}$ induced from $A_{\sf ed} \cap \pi$, which is the definition of $\mathcal{A}$ solving $\Lambda^k_{r,m}$.
\end{proof}

\begin{thm}[Completeness] Any set of action models $\mathcal{A}$ that solves $\Lambda^k_{r,m}$ can compute a plan $\pi$ that solves $P' = \langle F',A',I',G' \rangle$.
\label{thm:completeness}
\end{thm}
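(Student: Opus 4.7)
The plan is to construct $\pi$ explicitly by mirroring the three phases of the compilation. Given $\mathcal{A}$ that solves $\Lambda^k_{r,m}$, each action model $\alpha \in \mathcal{A}$ has a determined signature, a set of preconditions, and sets of positive and negative effects. Since $\mathcal{A}^k_r$ is the partially defined starting point (all well-defined preconditions present, empty effects), the difference between $\mathcal{A}^k_r$ and $\mathcal{A}$ can be expressed as a finite set of editions.

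First, I would build the edit prefix $\pi_{\sf ed} \subseteq A_{\sf ed}$ of $\pi$. For every $\alpha \in \mathcal{A}$ and every well-defined $p[\sigma]$ in the partial model that does \emph{not} appear in ${\sf pre}_\alpha$, include an ${\sf edit\text{-}rpre}^\alpha_{p,\sigma}$ action; symmetrically, for every $p[\sigma] \in {\sf add}_\alpha$ include an ${\sf edit\text{-}add}^\alpha_{p,\sigma}$ action, and analogously for ${\sf del}_\alpha$. Each such action is applicable because ${\sf edit\text{-}mode}$ holds throughout this phase and the variable-inequality precondition is satisfied by well-definedness of $\alpha$. These actions can be scheduled in any order since their conditional effects only set the corresponding $F_{\sf ed}$ fluents. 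After $\pi_{\sf ed}$, append the singleton ${\sf ed2val}$ action, which is applicable (${\sf edit\text{-}mode}$ still holds) and toggles the mode fluents to ${\sf val\text{-}mode}$.

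Next, I would build the validation suffix. Because $\mathcal{A}$ solves $\Lambda^k_{r,m}$, for each trace $\tau_i = \langle I_i, G_i \rangle$ there exists a classical plan $\pi_i = \langle a_1^i,\ldots,a_{n_i}^i \rangle$ of ground actions over $\mathcal{A}$ mapping $I_i$ to some state containing $G_i$. I translate each $a_j^i = \alpha[\omega]$ into the compilation action ${\sf apply}^{\alpha,\tau_i}_{\omega}$ and concatenate these translated plans (in any interleaving respecting the within-trace order, since different traces manipulate disjoint $F_{\sf pred}$ fluents). The claim is that in the resulting trajectory, each ${\sf apply}^{\alpha,\tau_i}_{\omega}$ is applicable: its precondition demands that for every $p[\sigma]$ either ${\sf rpre}^\alpha_{p,\sigma}$ holds (which is exactly the case after the edit phase, for precisely those predicates removed from ${\sf pre}_\alpha$) or ${\sf holds}^{\tau_i}_{p,\omega}$ holds; the latter is guaranteed by applicability of $a_j^i$ in the corresponding state of $\pi_i$. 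Its conditional effects update the ${\sf holds}^{\tau_i}_{\cdot,\cdot}$ fluents exactly in accordance with ${\sf add}_\alpha$ and ${\sf del}_\alpha$, mirroring the transition function $\theta$ used in $\pi_i$. After executing all translated plans, $G'$ is reached by construction.

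The main obstacle will be formalising the invariant that the ${\sf holds}^{\tau_i}_{p,\omega}$ fluents at each step of the validation suffix coincide with the state reached in the original trace $\tau_i$ after executing the corresponding prefix of $\pi_i$; this requires a careful induction on the length of each interleaved trajectory, together with the observation that actions for distinct traces commute because they touch disjoint fluents in $F_{\sf pred}$ and do not modify $F_{\sf ed}$ or $F_{\sf mode}$.
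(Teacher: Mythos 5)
Your proposal is correct and follows essentially the same route as the paper's own proof sketch: realise $\pi$ phase by phase, with an edit prefix, the ${\sf ed2val}$ transition, and a validation suffix obtained by translating the ground plans that $\mathcal{A}$ induces on each trace into ${\sf apply}^{\alpha,\tau}_{\omega}$ actions, using the disjointness of the per-trace ${\sf holds}$ fluents to interleave them. Your version is in fact more careful than the paper's, which glosses over the edit prefix (``$\mathcal{A}$ do not require any edition'') even though the compilation starts from $\mathcal{A}^k_r$ and the removed preconditions and programmed effects must still be materialised by ${\sf edit}$ actions before the ${\sf apply}$ actions can have the intended semantics.
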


\begin{proof}[Proof sketch]
Action models $\mathcal{A}$  do not require any edition since they are already a solution of the learning task $\Lambda^k_{r,m}$. Moreover, a sequence of grounded $\mathcal{A}$, named $\pi_\Lambda$, solve all traces $\tau \in \mathcal{T}$ by definition. Thus, each action in $\pi_\Lambda$ can be compiled to an action in $A_{\sf app}$, which applied in the same sequence in $\pi$ map each initial state $I_i$ to its goal condition $G_i$ s.t. $1\leq i \leq m$, in other words $\pi$ maps the initial state $I'$ to a goal condition $G'$, which encode the initial states and goal conditions of all traces, solving $P'$.
\end{proof}

\subsection{Extension to Partial Observability}
\label{ssec:dam-extensions}
The inputs for learning action models usually have some level of observability, and not the extreme view we assume where only initial states and goal conditions are observed. The DAM compilation can easily be extended by adding {\em validating actions} such as \cite{aineto2019learning,aineto2018learning}, that 
specify the order in which intermediate states must hold, and a goal condition extended with a new fluent that assess when intermediate states have been correctly validated. However, the problem can be simplified to vanilla DAM, where every pair of consecutive and fully defined states in the traces can be encoded as new planning problems of initial and goal states. For instance, a single trace $\tau = \langle s_0,s_1,s_2\rangle$ can be splitted into $\tau_1 = \langle s_0, s_1 \rangle$ and $\tau_2 = \langle s_1, s_2 \rangle$ which is the input to standard DAM compilation.

\section{Unsupervised Discovering Action Models}
\label{sec:conf-space}

The DAM compilation for computing $\strips$ action models assumes the number of action schemas and max-arity bounds are given. UDAM navigates through the space of configurations and invokes repeatadly the DAM compilation. The overall process is shown in Algorithm~\ref{alg:dam}. The phases are as follows:

\begin{algorithm}
\caption{Unsupervised Discovering Action Models (UDAM)}
\label{alg:dam}
\hspace*{\algorithmicindent} \textbf{Input:} $\mathcal{P}$ predicates, $\Sigma$ objects and $\mathcal{T}$ traces \\
\hspace*{\algorithmicindent} \textbf{Output:} $\mathcal{A}$ action models
\begin{algorithmic}[1]
\STATE $\pi^* \leftarrow \emptyset$
\STATE $k, r, m \leftarrow  1, 0, |\mathcal{T}|$
\STATE $P' \leftarrow {\sf compile}( \Lambda^k_{r,m} )$ 
\STATE $open \leftarrow \{ \langle P', k,r \rangle \}$
\STATE $close \leftarrow \emptyset$
\WHILE{ $open \neq \emptyset$ }
\STATE $\langle P', k,r \rangle \leftarrow argmin_{\langle P', k,r \rangle  \in open} |{\sf operators}(P')|$
\STATE $open \leftarrow open \setminus \{\langle P', k,r \rangle \}$
\STATE $close \leftarrow close \cup \{\langle P', k,r \rangle\}$
\STATE $\pi \leftarrow {\sf madagascar}(P')$
\IF{ ${\sf sat}(\pi)$ \AND $({\sf cost}(\pi) < {\sf cost}(\pi^*) $ \OR $\pi^* = \emptyset)$ }
\STATE $\pi^* \leftarrow \pi$
\ENDIF
\STATE $k' \leftarrow k + 1$
\STATE $P'_1 \leftarrow {\sf compile}( \Lambda^{k'}_{r,m})$
\IF{ $k' \leq 2\times |\mathcal{P}|$ \AND $\langle P'_1, k', r\rangle \not\in \{open \cup close\}$}
\STATE $open \leftarrow open \cup \{ \langle P'_1, k', r\rangle \}$
\ENDIF
\STATE $r' \leftarrow r + 1$
\STATE $P'_2 \leftarrow {\sf compile}( \Lambda^{k}_{r',m})$
\IF{ $r' \leq |\Sigma|$ \AND $\langle P'_2, k, r'\rangle \not\in \{open \cup close\}$ }
\STATE $open\leftarrow open \cup \{ \langle P'_2, k, r'\rangle \}$
\ENDIF
\ENDWHILE

\STATE $\mathcal{A} \leftarrow {\sf induce}(\pi^*)$
\STATE return $\mathcal{A}$
\end{algorithmic}
\end{algorithm}

\paragraph{Phase 1 - Initializing.} Lines 1 to 5 initialize the variables, where the best plan $\pi^*$ is empty, there is one action $k = 1$ without arity $r = 0$, the number of traces is fixed $m = |\mathcal{T}|$, and a new planning problem is generated with ${\sf compile}(\Lambda^k_{r,m})$ which is defined in the DAM Compilation Section. Then, there are open and close lists, to denote visited and expanded configuration nodes, where each configuration is a parameter assignment to the learning task.

\paragraph{Phase 2 - Searching.} While there are configurations to explore in the open list, the one with lowest number of grounded operators is selected, i.e. $|{\sf operators}(P')|$. Therefore, the selected configuration is removed from the open list and added to the close list. Then, a plan $\pi$ is computed for problem $P'$ using a SAT-based planner called Madagascar~\cite{rintanen2014madagascar}. If $\pi$ is satisfiable, i.e. ${\sf sat}(\pi)$, and its cost defined in Equation~\ref{eq:cost-function} is lower than the cost of the best plan $\pi^*$ or if it is empty, then there is a new best plan. 

\paragraph{Phase 3 - Expanding.} Two new compilations are added to the open list, if they have not been previously explored. The first increments by one the number of action signatures, while the second increments by one the max-arity. Actions are theoretically bounded in the number of predicates mutiplied by 2, which means actions to add and delete each predicate. Arity is bounded in the number of objects, which means an object for each parameter of an action. It continues in Phase 2 until the open list is empty.

\paragraph{Phase 4 - Inducing.} Once all available configurations have been explored, the set of action models $\mathcal{A}$ is induced from the best plan $\pi^*$ following the proof in Theorem~\ref{thm:soundness}. 

Other strategies can be applied to Algorithm~\ref{alg:dam} such as search for first configuration that is satisfiable, or use a different function to select the next tuple from the open list. These changes will might induce different sets of action models but they will still be sound and complete, since they only change the order in which the space of configurations is explored.

\section{Experiments}
\label{sec:experiments}
The two main evaluations for learning action models consist of their computation and posterior validation on new problems. Even though, UDAM algorithm can compute $\strips$ action models from predicates, objects and traces, it requires a human interpretation if we want to explain the generated model and check for a valid model generalization. Previous methods for model learning use existing planning models from the International Planning Competitions (IPCs) \cite{lopez2015deterministic}. In that domains the dynamics are known, so they become the perfect benchmark to test learnt action similarity, or even how different number of actions and parameters relate to the original domain. The intuitive outcome from this, is that real domains can be learnt with UDAM if the learnt models are similar to expert models.

\begin{table*}[t]
\centering
\caption{Results for classical problem of the IPC. \textit{Alg. total user time} is the total CPU time taken by the algorithm. \textit{User time first sol.}, \textit{Memory first sol. (MB)} are, respectively the CPU time spent and the peak memory usage for finding the first action model. \textit{User time best sol. (s)} and \textit{Memory best sol. (MB)} are the CPU time and peak memory usage for finding the model with the least cost. \textit{Lowest cost} is the minimum cost among the found action models. \textit{Coverage best sol.} is the fraction of validation instances that have been solved with the best model found.}
\label{tab:results}
\begin{tabular}{|r|r|r|r|r|r|r|r|}
\hline
\multicolumn{1}{|c|}{\textbf{Domain}} & \multicolumn{1}{c|}{\textbf{\begin{tabular}[c]{@{}c@{}}Alg. total\\ user time (s)\end{tabular}}} & \multicolumn{1}{c|}{\textbf{\begin{tabular}[c]{@{}c@{}}User time\\ first sol. (s)\end{tabular}}} & \multicolumn{1}{c|}{\textbf{\begin{tabular}[c]{@{}c@{}}Memory first\\ sol. (MB)\end{tabular}}} & \multicolumn{1}{c|}{\textbf{\begin{tabular}[c]{@{}c@{}}User time\\ best sol. (s)\end{tabular}}} & \multicolumn{1}{c|}{\textbf{\begin{tabular}[c]{@{}c@{}}Memory best\\ sol. (MB)\end{tabular}}} & \multicolumn{1}{c|}{\textbf{\begin{tabular}[c]{@{}c@{}}Lowest\\ cost\end{tabular}}} & \multicolumn{1}{c|}{\textbf{\begin{tabular}[c]{@{}c@{}}Coverage\\ best sol.\end{tabular}}} \\ \hline
hanoi                                 & 1580.43                                                                                          & 1411.02                                                                                          & 41.71                                                                                          & 1485.90                                                                                         & 301.18                                                                                        & -1.50                                                                               & 30/30                                                                                      \\ \hline
blocks                                & 344.44                                                                                           & 114.18                                                                                           & 17.33                                                                                          & 319.95                                                                                          & 127.97                                                                                        & -3.14                                                                               & 30/30                                                                                      \\ \hline
visitall                              & 283.47.08                                                                                           & 150.69                                                                                            & 14.88                                                                                          & 199.56                                                                                           & 32.44                                                                                         & -0.80                                                                               & 30/30                                                                                      \\ \hline
\end{tabular}
\end{table*}

\paragraph{Settings.}  All experiments are evaluated using a SAT-based planner called Madagascar~\cite{rintanen2014madagascar}, a memory bound of 8GB and a time-out for each instance is set to 30 minutes in an Intel(R) Core(TM) i7-7700HQ CPU @ 2.80GHz laptop.

\paragraph{Input.}
The domains and problems from IPCs are conveniently gathered
in the PLANNING.DOMAINS repository~\cite{muise2016planning.domains}. Selected domains are: \textit{hanoi}, where a certain amount of disks must be placed from leftmost peg to rightmost one; \textit{blocks}, where towers of blocks in a given setting have to be arranged in a different order, and \textit{visitall} where an agent must visit all cells in a grid. In case of no observability, we only need the planning instances, but with partial observability, we can extract partial sequences
of states by running a planner such as Fast Downward~\cite{helmert2006fast}, that can be configured to find either a satisfying or an optimal solution. The set of predicates and objects are known from planning instances. We have fed our algorithm with 5 traces from each domain.



\subsection{Computing action models}

We measure the performance of the computation in terms of time and memory usage. These results are reported in Table~\ref{tab:results}. The reported figures are averaged over 6 executions of the algorithm, each time with 5 traces. An interesting phenomena that can be noted is that UDAM spends most of the time finding the first solution. Once the first model is found, further models are found much more quickly. The reason is that Madagascar takes much more time to ascertain that a configuration is not viable than to find a solution for a viable configuration. Most of the configurations tried at the beginning by UDAM do not have solution, so a large amount of time is spent to rule out these configurations.

The first solution found is, typically, one whose preconditions are excessively relaxed. For instance, in the case of the \textit{hanoi} domain, the first solution that is found is one with two actions with arity two. One of these actions resembles a \textit{unstack} operator from the \textit{blocksworld} domain, while the other one resembles a \textit{stack} operator. However, \textit{hanoi} does not define any predicate to indicate that a disk is grasped, so there is no way to ensure that the \textit{unstack}-like operator will be applied to a disk that is not currenlty stacked on top of another one.

Since UDAM prefers models with a larger number of preconditions in comparison to the number of effects, it will retain models in which the natural actions that an human expert would suggest are broken into several smaller actions that average a larger number of preconditions. This could represent an issue since these models allow transitions that would not be allowed by the expert's domain.

\subsection{Validation}

Precision and recall have been typical metrics to analyze machine learning models. Recently, they have been adapted to compute action models~\cite{aineto2018learning}. One of the main limitations to consider this metrics in planning, is that ground truth models are required to know how far are solutions from human expert models. In addition, when the action signatures are not known beforehand, there is no baseline to compare, increasing the difficulty of action interpretation.

In our approach we assume that ground truth model is unknown, if that exists. Thus, the best model is computed by minimizing the cost function defined in Equation \ref{eq:cost-function}. Notice that this cost can be negative if, in average, there are more preconditions than effects.

To validate, we compute the \textit{coverage} over a set of validation instances different from the one used by the algorithm to infer the models. A total of 30 instances have been used to compute the coverage. We see that, even without the action signatures, our algorithm is able to find an action model that can solve all the validation instances. While this means that the found models can solve the instances they are meant to solve, they potentially allow transitions that are not allowed by the expert's domains. We are currently seeking ways to mitigate this shortcoming.

\section{Related Work}
Learning action models has been a topic of interest for long time \cite{gil1994learning,benson1995inductive,wang1995learning}. In those, the process of generating knowledge was incremental, and planning operators were refined by observing traces of an agent interacting in complex environments. Then, ARMS algorithm \cite{yang2007learning} was capable to generate deterministic planning operators, while SLAF \cite{amir2008learning} was learning partially observable operators from predicates, objects and sequences of actions. Recent work in learning action models \cite{aineto2019learning,aineto2018learning} were capable of computing $\strips$ operators from only initial and goal states using a classical planner. We have built our compilation based on the latter, where not only preconditions and effects must be learnt but action signatures too. Also, we solve the problem of programming static predicates by initializing partial action schemas with all possible predicate-parameter combinations and applying actions that remove preconditions. In addition, we propose an unsupervised algorithm (UDAM) capable of learning action models without any bound specification, just using theoretical bounds.

In contrast to {\em system-centric} approaches \cite{arora2018review}, there is the LOCM family of algorithms \cite{cresswell2009acquisition,cresswell2011generalised,gregory2015domain}. In those, the inputs are full observable traces of actions without any state or predicate information, producing an {\em object-centric} domain represented with finite state machines. However, planning actions usually interact with multiple objects at the same time, and depending on the level of trace observation it affects to learn the correct transition system as it is highly dependant on observed object types.

There have been further research that require statistical analysis because inputs are noisy. For instance, when the actions observed in plan traces could be wrong observations \cite{zhuo2013action}, or when states in plan traces are partially observable and noisy \cite{mourao2012learning}. Also, model learning can be applied to more expressive solutions beyond primitive actions, i.e. learning hierarchical task networks \cite{zhuo2014learning}, generating planning operators {\em on-demand} with a model-free approach \cite{martinez2017relational}, or the CAMA method for learning the models from human annotators (crowdsourcing) and plan traces \cite{zhuo2015crowdsourced}. All these approaches are orthogonal to UDAM in that they can be applied to solve the partial/noise observability in traces and guiding cost functions from crowd knowledge.

Last but not least, most work in learning action models, like ARMS or FAMA, assume all action signatures are observable in the plan traces or in a partial model of action schemas. Although, there is some work that explores to learn models from non-symbolic inputs to plan in the latent space such as AMA2 \cite{asai2018classical} and LatPlan for goal recognition \cite{amado2018goal}. Both require all transitions as part of the input and only the latter generates PDDL action schemas (without guarantees) from the bits that change in the latent space. In our case, the learning task is similar to AMA2 or LatPlan, in the sense that has no information about actions, and akin to FAMA because intermediate states are non-observable. Thus, to the best of our knowledge, UDAM is the first unsupervised approach to generate $\strips$ action models from non-observable traces, where only initial states and goal conditions are known.


\section{Conclusions}

In this paper we have explored the unsupervised generation of action models, where not only preconditions and effects are learnt but action signatures too. We have followed a {\em compilation-based} approach to classical planning, so that we can use any off-the-shelf classical planner. We propose a new method in the DAM compilation, that mitigates the problem of learning static predicates, in that preconditions are removed instead of programmed (added) as in previous approaches. In the experiments, we use the IPC benchmarks to evaluate the convergence of our approach for model learning, the performance in the computation and the generalization when validating learnt action models compared to previous methods. Our validation shows that instances that are meant to be solvable are, effectively, solved by our algorithm. However, preconditions may be too weak.

As future work we consider to introduce negative examples to improve the model quality with a more informed cost function, akin to {\em Inductive Logic Programming} \cite{muggleton1994inductive}. Also, we want to address the connection between computing minimal action schemas and {\em least commitment planning} \cite{weld1994introduction}, the generation of simpler actions (less parameters) from complex actions (with  many parameters) \cite{areces2014optimizing}, and proving that the optimal set of action models (cost function based) is as complex as finding an optimal plan \cite{helmert2008good}. In addition, we want to apply UDAM algorithm to robotics, where a human can guide a robot to learn high-level skills from sensor information. Thus, our interests rely in a valid framework that can unsupervisedly compute with guarantees a high-level model from non-symbolic data such as the following methods \cite{asai2018classical,amado2018goal,bonet2019learning}.

\section{Acknowledgments}
The research leading to these results has received funding from the EU H2020 research and innovation programme under grant agreement no.731761, IMAGINE; the HuMoUR project TIN2017-90086-R (AEI/FEDER, UE); and AEI through the María de Maeztu Seal of Excellence to IRI (MDM-2016-0656).


\bibliographystyle{aaai}
\bibliography{aaai20.bib}

\end{document}